\documentclass[sigconf]{acmart}

\AtBeginDocument{%
  }

\copyrightyear{2026}
\acmYear{2026}
\setcopyright{cc}
\setcctype{by}
\acmConference[WWW '26] {Proceedings of the ACM Web Conference 2026}{April 13--17, 2026}{Dubai, United Arab Emirates.}
\acmBooktitle{Proceedings of the ACM Web Conference 2026 (WWW '26), April 13--17, 2026, Dubai, United Arab Emirates}
\acmISBN{979-8-4007-2307-0/2026/04}
\acmDOI{10.1145/3774904.3792205}
\usepackage{multirow}
\usepackage{bm}
\usepackage{svg}
\usepackage{subcaption}
\usepackage{stfloats}
\usepackage{xspace}
\usepackage{enumitem}
\usepackage{amsthm}
\usepackage{pifont}
\usepackage{makecell}
\usepackage{url}

\newcommand{\stitle}[1]{\vspace*{0.4em}\noindent{\bf #1\/}}
\newcommand{\squishlist}{
	\begin{list}{$\bullet$}
		{ \setlength{\itemsep}{1pt}
			\setlength{\parsep}{1pt}
			\setlength{\topsep}{2.5pt}
			\setlength{\partopsep}{0.5pt}
			\setlength{\leftmargin}{1em}
			\setlength{\labelwidth}{1em}
			\setlength{\labelsep}{0.6em}
		}
	}
	\newcommand{\squishend}{
	\end{list}
}

\newtheorem{theorem}{Theorem}

\begin{document}

\title{Text-attributed Graph Condensation via Text Selection and Attribute Matching}

\author{Haowei Han}
\authornote{Both authors contributed equally to this research.}
\orcid{0009-0002-5201-3893}
\affiliation{%
  \department{School of Computer Science}
  \institution{Wuhan University}
  \city{Wuhan}
  \country{China}
}
\email{haowei.han@whu.edu.cn}

\author{Yuxiang Wang}
\authornotemark[1] 
\orcid{0000-0002-5483-8322}
\affiliation{%
  \department{School of Computer Science}
  \institution{Wuhan University}
  \city{Wuhan}
  \country{China}
}
\email{nai.yxwang@whu.edu.cn}

\author{Guojia Wan}
\orcid{0000-0003-3151-8606}
\affiliation{%
  \department{School of Computer Science}
  \institution{Wuhan University}
  \city{Wuhan}
  \country{China}
}
\email{guojiawan@whu.edu.cn}

\author{Hao Wang}
\orcid{0000-0003-2129-2148}
\affiliation{%
  \department{School of Computer Science}
  \institution{Wuhan University}
  \city{Wuhan}
  \country{China}
}
\email{wanghao.cs@whu.edu.cn}

\author{Shanshan Feng}
\orcid{0000-0002-6161-9232}
\affiliation{%
  \department{School of Computer Science}
  \institution{Wuhan University}
  \city{Wuhan}
  \country{China}
}
\email{victor_fengss@whu.edu.cn}

\author{Hao Huang}
\orcid{0000-0002-3777-1488}
\affiliation{%
  \department{School of Computer Science}
  \institution{Wuhan University}
  \city{Wuhan}
  \country{China}
}
\email{haohuang@whu.edu.cn}

\author{Jiawei Jiang}
\authornote{Corresponding authors.}
\orcid{0000-0003-0051-0046}
\affiliation{%
  \department{School of Computer Science}
  \institution{Wuhan University}
  \city{Wuhan}
  \country{China}
}
\email{jiawei.jiang@whu.edu.cn}

\author{Xiao Yan}
\authornotemark[2] 
\orcid{0000-0002-2122-915X}
\affiliation{%
  \department{Institute for Math \& AI}
  \institution{Wuhan University}
  \city{Wuhan}
  \country{China}
}
\email{yanxiaosunny@whu.edu.cn}

\renewcommand{\shortauthors}{Haowei Han et al.}

\begin{abstract}
    Text-Attributed Graph (TAG) is an important type of graph structured data, where each node has a text description. TAG models usually train a Graph Neural Network (GNN) and language model jointly, which leads to high space and time consumption, especially on large datasets. To mitigate this, we propose TAGSAM, a condensation method that compresses TAGs while preserving training accuracy. TAGSAM comes with two key designs, i.e., \textit{subgraph text Selection} and \textit{Attribute similarity Matching}, which compress the text description and graph topology of TAG, respectively. For the texts, subgraph text selection selects and merges representative text chunks from multiple related text descriptions by maximizing mutual information. For the graph topology, popular condensation methods based on Matching Training Trajectories (MTT) suffer from high variance, which hinders accuracy. Our attribute similarity matching mitigates this issue by aligning stable similarity matrices. We evaluate TAGSAM against six state-of-the-art baselines, where it showcases superior performance. For the same compressed size, TAGSAM improves upon the best-performing baseline by an average of 4.9\% in accuracy. Furthermore, it maintains competitive training accuracy even when the TAG is condensed to just 1\% size. Our code is available at https://github.com/SundayVHan/TAGSAM
\end{abstract}

%
%

\begin{CCSXML}
<ccs2012>
   <concept>
       <concept_id>10002951.10003227.10003351</concept_id>
       <concept_desc>Information systems~Data mining</concept_desc>
       <concept_significance>500</concept_significance>
       </concept>
 </ccs2012>
\end{CCSXML}

\ccsdesc[500]{Information systems~Data mining}

\keywords{Text-attributed Graph, Dataset Condensation, Graph Learning}

\maketitle

\section{Introduction}


Text-Attributed Graphs (TAGs)~\cite{chang2009relational, htag} integrate graph topology with textual node descriptions. Current methods jointly train a Graph Neural Network (GNN) and a language model; for example, G2P2~\cite{wen2023augmenting} aligns graph and text encoder representations via contrastive learning, while GraphCLIP~\cite{zhu2024graphclip} utilizes an LLM to summarize local neighborhoods for richer context. Applied to tasks like article classification and social network analysis~\cite{li2023grenade, brannon2023congrat, exploiting, tesa, retrofitting}, TAG models leverage text as an extra supervisory signal to lower label dependency and boost zero-shot capabilities compared to traditional graph models. However, this dual-modal training remains highly memory- and time-intensive on large datasets.


Dataset condensation~\cite{wang2018dataset, zhao2023dataset} reduces training costs by compressing a large dataset into a smaller, highly representative one. Current approaches fall into two categories.
Selective methods~\cite{har2006maximum,welling2009herding,farahani2009facility, braverman2022power} curate a representative subset from the original data; for instance, Herding~\cite{welling2009herding} minimizes the feature discrepancy between the subset and the full dataset.
Generative methods~\cite{wang2018dataset,zhao2020dataset, zhao2023dataset} synthesize compressed data via bi-level optimization, aligning student models (trained on compressed data) with teacher models (trained on original data). This paradigm has recently been extended to graph by methods like GCond~\cite{jin2021graph}, which matches gradients between student and teacher models to generate synthetic graphs.

\begin{figure}[t]
    \centering
    \includegraphics[width=\linewidth]{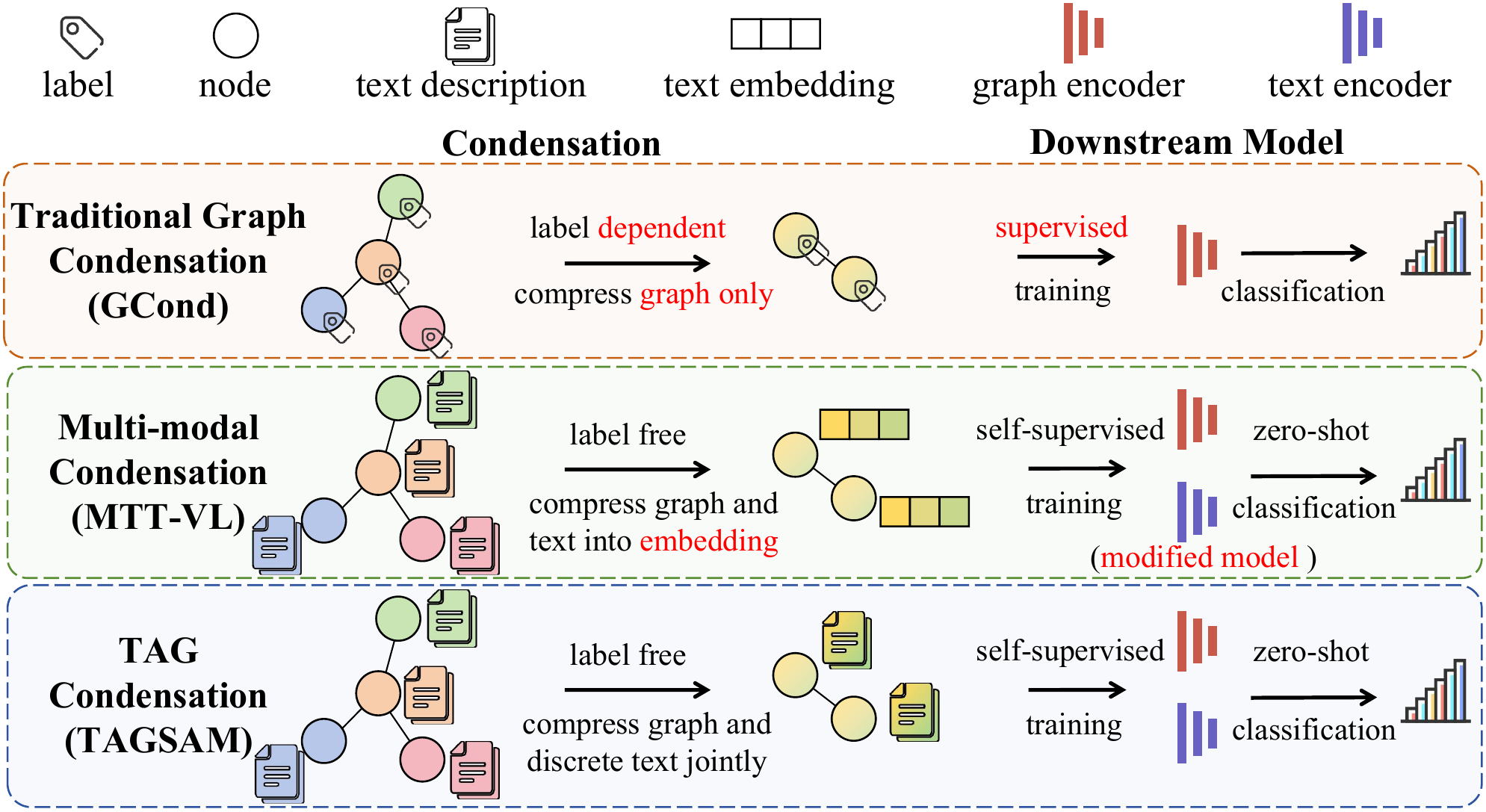}
    \caption{Comparison of different dataset condensation paradigms. Unlike existing methods, TAGSAM ensures label independence and preserves the discrete nature of text data.
    }
    \label{fig:comparison}
    \vspace{-8pt}
\end{figure}

\stitle{Comparison to traditional graph condensation.}
    While effective for standard graphs, traditional graph condensation methods like GCond is ill-suited for TAGs. As Figure~\ref{fig:comparison} demonstrates, such paradigm face two key limitations in this context: First, they condense only the graph modality, ignoring the rich, high-dimensional text. Second, their reliance on label supervision is incompatible with common TAG scenarios, where ground-truth labels are often scarce or entirely unavailable due to high annotation costs~\cite{wen2023augmenting}. Consequently, these traditional methods, not being designed to preserve joint information, fail to meet the needs of TAG models for tasks like contrastive learning and zero-shot classification.

\stitle{Challenges for current condensation methods.} 
    Among recent advances, the work most relevant to TAG condensation is the line of multi-modal dataset condensation methods that employ Matching Training Trajectories (MTT) to compress image-text datasets \cite{wu2023multimodal, xu2024low}, which involves aligning the model parameter update trajectories produced between training on original dataset and compressed one. Yet, despite their success elsewhere, we find they perform poorly on TAGs due to the following two reasons:

    \squishlist
        \item \textit{Unreadable text.} MTT employs gradient descent for condensation, which excels continuous features but struggles with discrete text, resulting in text being compressed into unreadable embedding vectors. Consequently, these embeddings are incompatible with downstream models like tokenizers. While retrieving nearest neighbor original texts in embedding space as substitutes is attempted, it incurs long runtime and lower model performance.
        
        \item \textit{Unstable training.} As shown in Figure~\ref{fig:loss and acc}(a), MTT exhibits significantly volatile training loss and accuracy when condensing a TAG, consequently resulting in sub-optimal convergence. This instability arises because the teacher model's contrastive loss, computed on randomly sampled mini-batches, yields high trajectory variance, and hinders the student model's matching process.
    \squishend

\begin{figure}[t]
    \centering
    \includegraphics[width=\linewidth]{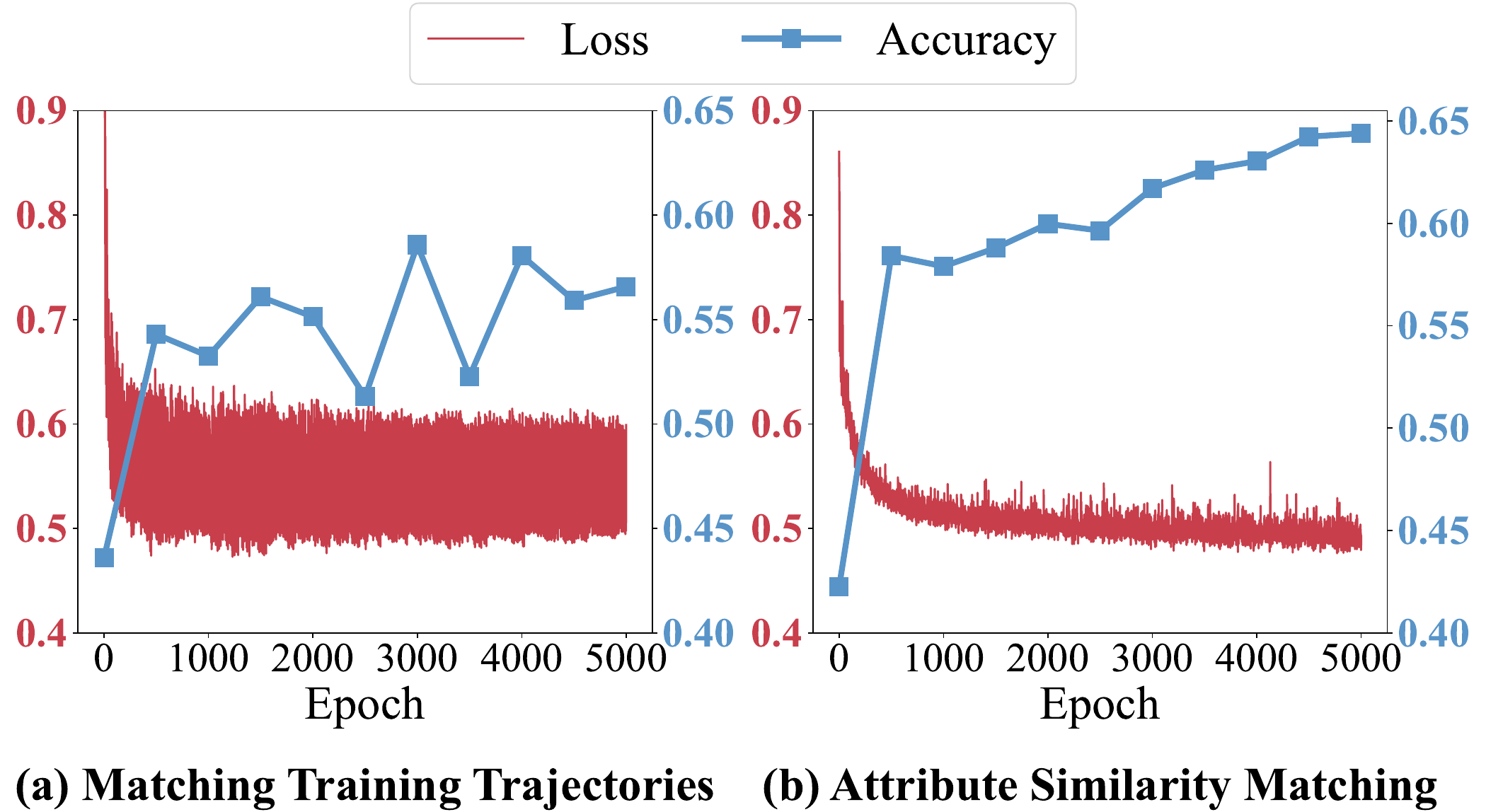}
    \caption{Matching training trajectories vs. Attribute similarity matching. The left and right y-axes denote the model loss and accuracy, respectively.}
    \label{fig:loss and acc}
    \vspace{-8pt}
\end{figure}

These analysis shows that existing condensation methods cannot work well on TAGs. Thus, we ask the following research question:  

\begin{quote}
    \textit{Is it possible to design a TAG condensation method in no-label scenarios, which can generate discrete compressed text and achieves stable convergence?}
\end{quote}

To answer this question, we propose TAGSAM, which tackles the two critical challenges of MTT through two key components: \textit{subgraph text selection} and \textit{attribute similarity matching}.
Instead of updating compressed text embeddings via SGD, subgraph text selection directly selects representative text from the original space.
Besides, compared to condensing TAGs via matching training trajectories, which suffers from high variance due to random batching, we propose to match attribute similarity, which is inherently more stable. This approach yields greater training stability and improves model accuracy, as shown in Figure~\ref{fig:loss and acc}(b).
Furthermore, we theoretically analyze our proposed method. First, for subgraph text selection, we introduce a practical greedy method and theoretically demonstrate that it effectively approximates the optimal solution. Second, we prove that attribute similarity serves as an effective estimate for point-wise mutual information between attributes, and is thereby inherently stable, leading to more robust training.

To evaluate TAGSAM, we conduct extensive experiments and compare it with 6 state-of-the-art dataset condensation methods. The results show that TAGSAM consistently yields higher model training accuracy than all baselines. Compared with the best performing baseline, the average accuracy improvement of TAGSAM is 4.9\%. Moreover, TAGSAM achieves competitive training accuracy with only a 1\% condensation ratio of original TAGs. Besides, ablation studies indicate our model designs are effective. Furthermore efficiency studies suggest that TAGSAM is efficient in running time.

To summarize, we make the following contributions:

\squishlist
    \item We identify that traditional graph condensation methods are unsuitable for TAG condensation scenarios, and existing multi-modal dataset condensation methods perform poorly due to issues such as unreadable text and high variance.
    
    \item We propose subgraph text selection to select representative texts in the discrete text space, and introduce attribute similarity matching to reinforce robustness via matching stable attribute correlations instead of training trajectories.
    
    \item We conduct extensive experiments on 5 datasets and compare TAGSAM with 6 state-of-the-art baselines. On average, TAGSAM raises downstream models' accuracy by 4.9\% over the strongest baseline, and matches or even improves the running time.
\squishend

\section{Preliminaries}
\stitle{Contrastive learning for TAG.} 
    Recent TAG models leverage contrastive learning to jointly learn graph and text embeddings. The goal is to maximize the similarity of positive pairs while minimizing that of negative pairs. For example, G2P2~\cite{wen2023augmenting} employs a GCN~\cite{kipf2016semi} and a transformer~\cite{vaswani2017attention}-based text encoder to align representations of graph topology and node text. A common objective is InfoNCE~\cite{oord2018representation}, where the loss for a batch $B$ of $b$ nodes is defined as:
    \begin{equation}
        \ell_{con}(B) = -\frac{1}{b} \sum_{i=1}^{b} \left[ \frac{\exp(s_{ii} / \tau)}{\sum_{j} \exp(s_{ij} / \tau)} + \frac{\exp(s_{ii} / \tau)}{\sum_{j} \exp(s_{ji} / \tau)} \right],
        \label{equ:contrastive loss}
    \end{equation}
    where $s_{ij}$ is the cosine similarity between node $i$’s graph embedding and node $j$’s text embedding, and $\tau$ is the temperature coefficient. 

\stitle{Zero-shot classification for TAG.}
The contrastive learning leverages text as additional supervisory signal, which enables zero-shot classification. This allows the model to predict a node's class without requiring labeled examples during training~\cite{mandal}. This capability is highly valuable in TAGs because labels are expensive to obtain~\cite{wen2023augmenting}. Detailed implementation can be found in Appendix~\ref{apx:zero-shot}.

\stitle{TAG Condensation Task.}
    We introduce TAG condensation to reduce the high computational training cost. The goal is to compress a TAG $\mathcal{T}=(X,A,T)$—with node features $X\in\mathbb{R}^{N\times d}$, adjacency $A\in\mathbb{R}^{N\times N}$, and node texts $T$—into a synthetic TAG $\mathcal{S}=(\hat X,\hat A,\hat T)$ containing only $M \ll N$ nodes. The condensed representation, defined by $\hat X \in \mathbb{R}^{M\times d}$, $\hat A \in \mathbb{R}^{M\times M}$, and readable texts $\hat T$, is optimized to retain the information necessary for effective model training.

\begin{figure*}[t]
    \centering
    \includegraphics[width=0.95\linewidth]{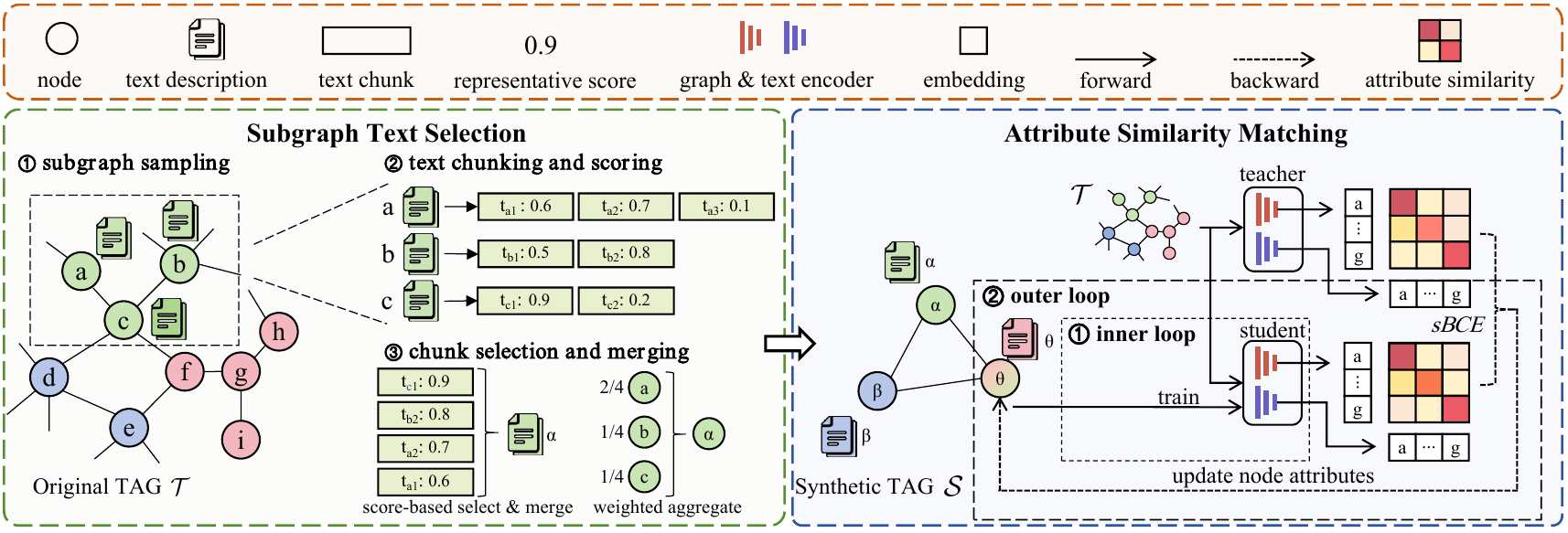}
    \caption{The overview of TAGSAM framework. Subgraph text selection (left) selects the representative chunks and merge them as a compressed text. Attribute similarity matching (right) compresses graph by minimizing the discrepancy between the attribute similarity matrices of the teacher and student models.}
    \label{fig:framework}
\end{figure*}
\section{The TAGSAM Method}

\stitle{Overview.} 
    We propose a dataset condensation method dubbed as TAGSAM to compress the text-attributed graph dataset.
    The framework overview of TAGSAM is illustrated in Figure~\ref{fig:framework}.
    Our method consists of two key condensation techniques: subgraph text selection and attribute similarity matching, which are used to compress text and graph, respectively.
    In particular, we start by introducing the subgraph text selection, which compresses multiple text descriptions into a concise one by selecting representative chunks. Subsequently, we compress graph by stable attribute similarity matching, which avoids the high variance caused by MTT.

\subsection{Subgraph Text Selection}
    \stitle{Problem: unreadable text.}
        Existing multi-modal dataset condensation methods optimize text directly in the embedding space via gradient descent. However, this approach has two critical limitations. First, the optimized embeddings are unreadable. Second, these embeddings are cannot be used as input for the tokenizer, rendering them incompatible with downstream  language models and thus limiting their application.
        Therefore, we ask: \textit{can we compress large-scale text descriptions into concise and readable texts?}
    
    \stitle{Solution: discrete text selection.} 
        To answer the research question, we propose subgraph text selection to compress several text descriptions into a concise one $\hat{T_i}$. The insight behind subgraph text selection is to select and merge representative text chunks from the original  text descriptions, thereby eliminating the need to optimize continuous text embeddings. As depicted in Figure~\ref{fig:framework}~(left), the subgraph text selection process involves three key steps:
        
        \begin{itemize}[leftmargin=*]
             \item{\textbf{Step~\ding{192}: subgraph sampling.}} 
             We sample \(M\) subgraphs by randomly selecting a central node and performing a Breadth-First Search (BFS) until each subgraph reaches a size of \(g\).
            
            \item{\textbf{Step~\ding{193}: text chunking and scoring.} }
            We divide the texts within each subgraph \(G_{\text{sub,i}}\) into a set of chunks, where each chunk consists of a series of consecutive sentences. We denote the collection of all such chunks as \(D_i\). Then, we compute a representative score for each text chunk within each \(D_i\).
            
            \item{\textbf{Step~\ding{194}: chunk selection and merging.}} 
            From the set \(D_i\), we select top \(k\) representative chunks and merge them to form a concise, compressed text $\hat{T_i}$. This process is repeated for each subgraph, resulting in \(M\) compressed text descriptions.
        \end{itemize}
        
        Subgraph text selection helps avoid the risk of blending unrelated texts by merging adjacent texts. Furthermore, compared to using the original texts directly as new text descriptions, our method provides the model with substantially better accuracy through enriched, non-redundant semantic information.
    
    \stitle{Selection criterion.}
        After completing the aforementioned three condensation steps, we can preliminarily compress the text descriptions into a concise one. However, one issue remains unresolved: \textit{what is the criterion for selecting text chunks?} To address this problem, we introduce Chunk-wise Mutual Information (CMI) to evaluate the representativeness of text chunks and select the chunk with the highest representative score for merging at each condensation step. The essence of the CMI is to measure the amount of shared information between different text chunks. Specifically, given two text chunks $c_i$ and $c_j$ within a subgraph $G_{\text{sub,o}}$, their mutual information is calculated as follows:
        \begin{equation}
            {\tt CMI}(c_i; c_j) = \log \frac{p(c_i, c_j)}{p(c_i) \cdot p(c_j)} = \log \frac{p(c_i | c_j)}{p(c_i)}.
            \label{equ:cmi}
        \end{equation}
        After calculating the CMI between each pair of chunks, the next step is to select chunks to form a representative \(\hat{T_o}\). Our design principle is to maximize the \textit{information coverage} of \(D_o\) while minimizing \textit{redundancy}. We define the representativeness score (RS) as follows:
        \begin{equation}
            {\tt RS}(\hat{T_o}) = \underbrace{\sum_{\hat{c} \in \hat{T_o}} \sum_{c \in D_o} {\tt{CMI}}(\hat{c}; c)}_{\text{Information Coverage}} - \underbrace{\sum_{\hat{c_i}, \hat{c_j} \in \hat{T_o}} {\tt{CMI}}(\hat{c_i}; \hat{c_j})}_{\text{Redundancy Penalty}}.
            \label{equ:selection score}
        \end{equation}
        The first term maximizes the information correlation between \(\hat{T_o}\) and \(D_o\), while the second term penalizes redundancy in $\hat{T_o}$.

        However, it is impractical to calculate the real probability distribution of chunks as shown in Equation \ref{equ:cmi}. Therefore, we estimate the probability \( p(c_i) \) of chunk $c_i$ using a pre-trained causal language model \(\phi\). For conditional probabilities \( p(c_i|c_j) \), we estimate it by calculating the probability of \( c_i \) when \( c_j \) is used as a prefix with \(\phi\). Finally, we can estimate the chunk-wise mutual information between \(c_i\) and \(c_j\) as follow: 
        \begin{equation}
            \widehat {\tt CMI}(c_i; c_j) = \frac{1}{2}\max \left( \log \frac{\phi(c_i | c_j)}{\phi(c_i)} + \log \frac{\phi(c_j| c_i)}{\phi(c_j)}, 0 \right).
            \label{merge objective}
        \end{equation}
        We take the average because language model can not guarantee $\log \frac{\phi(c_i| c_j)}{\phi(c_i)} = \log \frac{\phi(c_j| c_i)}{\phi(c_j)}$. Zero truncation is applied to focus on chunk correlation and prevent misleading results from language model instability when taking logarithms.
    
        Since the objective in Equation \ref{equ:selection score} is a combinatorial problem, solving it becomes increasingly costly as \(\left|D_o\right|\) grows larger. To address this problem, we employ an approximate solution by selecting chunks sequentially using a greedy algorithm. In particular, we choose the next chunk from the \(D_o\) that provides the greatest improvement to the Equation \ref{equ:selection score}, until we have selected \( k \) chunks at each condensation step.

    \stitle{Theoretical analysis.}
        We also present a detailed theorem demonstrating that our greedy algorithm, which selects text chunks based on representative score, can approximate the theoretically optimal solution in complex combinatorial optimization.
    
        \begin{theorem}
            Given a cardinality constraint $k$ and the natural logarithm base \( e \), the greedy algorithm yields a solution $\hat{T_o}$ that approximates the optimal text selection $T^*_o$:
            \[{\tt RS}(\hat{T_o}) \geq \left(1 - \frac{1}{e}\right) {\tt RS}(T^*_o).\] 
        \end{theorem}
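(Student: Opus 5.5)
The plan is to invoke the classical result of Nemhauser, Wolsey and Fisher: for a normalized, monotone non-decreasing, submodular set function under a cardinality constraint $k$, the greedy algorithm attains at least a $(1-1/e)$ fraction of the optimum. The work is therefore to show that ${\tt RS}$ in Equation~\ref{equ:selection score}, viewed as a set function $F(S)$ on subsets $S\subseteq D_o$ with $\widehat{\tt CMI}$ in place of ${\tt CMI}$, satisfies these three properties. Normalization $F(\emptyset)=0$ is immediate. I would use that $\widehat{\tt CMI}$ is symmetric and nonnegative by construction (Equation~\ref{merge objective}). I would also fix the convention that the redundancy sum ranges over unordered pairs $i\neq j$, so that $F(S)=\sum_{s\in S}w(s)-\sum_{\{s,t\}\subseteq S}\widehat{\tt CMI}(s;t)$ with $w(s)=\sum_{c\in D_o}\widehat{\tt CMI}(s;c)$.

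\textbf{Submodularity.} The marginal gain of adding $x\notin S$ is
\begin{equation*}
\Delta(x\mid S)=w(x)-\sum_{s\in S}\widehat{\tt CMI}(x;s).
\end{equation*}
Since every $\widehat{\tt CMI}\geq 0$, enlarging $S\subseteq S'$ only adds subtracted terms. Hence $\Delta(x\mid S)\geq\Delta(x\mid S')$, which is exactly diminishing returns. This step is routine: $F$ is a modular function minus a nonnegative pairwise (quadratic) penalty, which is the standard submodular form.

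\textbf{Monotonicity (main obstacle).} We need $\Delta(x\mid S)\geq 0$ for all $S$ with $|S|<k$. I would obtain it by comparing the coverage term with the penalty. The sum $w(x)=\sum_{c\in D_o}\widehat{\tt CMI}(x;c)$ runs over all of $D_o$, which contains every $s\in S$. It therefore already includes each term $\widehat{\tt CMI}(x;s)$ that is subtracted, plus further nonnegative terms, including the self term $\widehat{\tt CMI}(x;x)\geq 0$. So
\begin{equation*}
\Delta(x\mid S)=\sum_{c\in D_o\setminus S}\widehat{\tt CMI}(x;c)\geq 0.
\end{equation*}
This clean cancellation relies on the pair-counting convention and on $S\subseteq D_o$. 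If the redundancy sum were instead over ordered pairs, each term would be doubled and monotonicity could fail. In that case I would follow the fallback below.

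\textbf{Fallback if monotonicity fails.} One option is to rescale the penalty by $1/2$; with that convention the argument above goes through. Alternatively, I would run the standard greedy proof directly and use only monotonicity up to cardinality $k$, which is all the analysis requires.

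\textbf{Greedy bound.} Let $S_i$ be the greedy set after $i$ steps. By monotonicity and submodularity,
\begin{equation*}
F(T_o^*)\leq F(S_i)+\sum_{x\in T_o^*}\Delta(x\mid S_i)\leq F(S_i)+k\,\bigl(F(S_{i+1})-F(S_i)\bigr).
\end{equation*}
This gives $F(T_o^*)-F(S_{i+1})\leq(1-1/k)\bigl(F(T_o^*)-F(S_i)\bigr)$. Iterating $k$ times and using $(1-1/k)^k\leq 1/e$ together with $F(\emptyset)=0$ yields ${\tt RS}(\hat T_o)\geq(1-1/e)\,{\tt RS}(T_o^*)$.
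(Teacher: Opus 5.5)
Your proposal is correct and follows the paper's proof. The paper shows monotonicity through the same cancellation, $\sum_{c_j\in D_o}\widehat{\tt CMI}(c_i;c_j)-\sum_{c_j\in\hat T_o}\widehat{\tt CMI}(c_i;c_j)=\sum_{c_j\in D_o\setminus\hat T_o}\widehat{\tt CMI}(c_i;c_j)\ge 0$, gets submodularity from the redundancy penalty growing with the set, and then cites Nemhauser et al.\ rather than writing out the greedy bound; your unordered-pair convention is what that marginal-gain computation implicitly assumes, so your fallback is never needed (and its claim that monotonicity up to cardinality $k$ suffices is inaccurate, because the standard analysis uses $F(T_o^*)\le F(S_i\cup T_o^*)$ on sets of size up to $2k-1$).
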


        The proof is deferred to Appendix~\ref{apx:proof1}.
        It confirms that greedy selection yields at least a constant fraction of the optimum.

\subsection{Attribute Similarity Matching}
    After condensing texts, the next step is to condense the graph, including graph structure and node features.

    \stitle{Graph topology initialization.}
        Through subgraph text selection, we derive \(M\) compressed text descriptions. First, to align each compressed description with a graph structure, we follow GCond~\cite{jin2021graph} and employ an MLP trained on the original graph to predict links based on text embeddings between node pairs. This avoids nested optimization and is highly efficient, as the texts remain fixed.
        Next, to assign a corresponding node feature to each compressed text description, we perform a weighted aggregation of the original node features within each raw subgraph, as shown in step \ding{194} of Figure~\ref{fig:framework}. The weight for each node's features is determined by the proportion of text chunks selected from it.

    \stitle{Bi-level optimization for node feature.}
        While generating node features through aforementioned weighted aggregation is a feasible approach, this simple operation struggles to achieve robust alignment with the compressed texts and the generated graph structure. To overcome this limitation, we adopt the well-established paradigm of bi-level optimization. This allows us to dynamically generate node features.
        We can detail this approach as follows:
    \squishlist
        \item{\bf Inner loop.}
            As shown in Figure~\ref{fig:framework}, we optimize a model \(f\) called student model with randomly initialized parameter \(\theta^0_s\) by continuously reducing \(\ell_{con}\) on the synthetic graph \(\mathcal{S}\) with learning rate \(\eta\) as shown in Equation \ref{inner}.
            \begin{equation}
            \theta_s^{t+1}  \leftarrow \theta_s^{t}- \eta \nabla_{\theta} \ell_{con} \left( f_{\theta_s^t} \left(\mathcal{S}\right)\right)
            \label{inner}
            \end{equation}
        \item{\bf Outer loop.}
            Let \(\Theta = \{\theta^0,\theta^1,...,\theta^*\}\) denote the set of parameters updated at each step. We aim to find a suitable \(\hat{X}\), such that the set of student parameters \(\Theta_s\) obtained from inner loop, aligns with the set of teacher parameters \(\Theta_t\) obtained from training on the original graph \(\mathcal{T}\). The alignment is measured by a matching loss function \(\ell_{match}\) in a certain aspect. We denote it as follow:
            \begin{equation}
                \min_{\hat{X}}\mathbb{E}_{\theta^0_s \sim P_{\theta^0_s}} \ell_{match} \left( \Theta_s, \Theta_t \right)
            \end{equation}
    \squishend

    \stitle{Problem: high variance in training trajectories.}
        Current multi-modal dataset condensation methods often utilize the above bi-level optimization framework and employ Matching Training Trajectories (MTT) as their matching loss function $\ell_{match}$. This approach operates by aligning the parameter update trajectories of teacher and student models across their respective training stages. However, when applying this strategy to TAG condensation, we observe that the compressed graph often fails to accurately represent the original. The core issue lies in the nature of the teacher models in the TAG scenario, which are pre-trained via contrastive learning. As demonstrated by~\cite{joshi2024dataset}, contrastive learning loss inherently produces gradients with high variance. This is because the loss for any given sample is highly dependent on the other samples within the same batch, unlike in supervised learning where each example's contribution is independent. 
        The high variance in gradients causes unpredictable and unstable teacher trajectories. As a result, the trajectory matching task becomes a very difficult optimization problem. This instability, as shown by our observations in Figure~\ref{fig:loss and acc}, ultimately reduces the effectiveness of MTT for TAG condensation.

    \stitle{Solution: attribute similarity matching.}
        To avoid the unstable optimization process associated with MTT, we design a condensation technique dubbed as \textit{attribute similarity matching}. In particular, the product of graph embeddings $f^G(X)$ and text embeddings $f^T(T)$ between each pair of samples forms the attribute similarity matrices $SM$. We aim to minimize the discrepancy between the similarity matrices computed by the student model (i.e., $SM_s$) and the teacher model (i.e., $SM_t$) on the original samples, ensuring student model can capture the inter-modal relationships. Formally, we denote our matching loss function $\ell_{match}$ as follows:
        \begin{equation}
            SM_s = f^G_{\theta^*_s}(X) f^T_{\theta^*_s}(T)^\top, SM_t = f^G_{\theta^*_t}(X) f^T_{\theta^*_t}(T)^\top .
        \end{equation}
        \begin{equation}
            \ell_{match} = \frac{1}{2}(\ell_{sBCE}(SM_s,SM_t) + \ell_{sBCE}(SM_s^\top,SM_t^\top)),
        \end{equation}
        where \(sBCE\) means that applying a row-wise softmax to each similarity matrix before computing Binary Cross Entropy, giving smoother gradients and steadier alignment.

    \begin{table*}[t]
    \centering
    \caption{Complexity analysis of matching trajectory training (MTT) and our attribute similarity matching (ASM).}
    \begin{tabular}{c|c|cccc}
    \toprule
    \multirow{2}{*}{\textbf{Method}} & \multirow{2}{*}{\textbf{Teacher Model Training}} & \multicolumn{4}{c}{\textbf{Condensation}} \\ \cline{3-6}
     & & Forward $\ell_{con}$& Update $f_s$ & Loss $\ell_{match}$ & Update $\mathcal{S}$ \\ \midrule
    \textbf{MTT} & $\mathcal{O}(Z(L_gEh_g+L_gBh_g^2+L_tBh_t^2+B^2))$ & $\mathcal{O}(q(L_gMh_g^2+L_tMh_t^2))$ & $\mathcal{O}(q(L_gh_g^2+L_th_t^2))$ & $\mathcal{O}(L_gh_g^2+L_th_t^2)$ & $\mathcal{O}(M)$ \\ 
    \textbf{ASM} & $\mathcal{O}(L_gEh_g+L_gBh_g^2+L_tBh_t^2+B^2)$ & $\mathcal{O}(q(L_gMh_g^2+L_tMh_t^2))$ & $\mathcal{O}(q(L_gh_g^2+L_th_t^2))$ & $\mathcal{O}(L_gmh_g^2+L_tmh_t^2+m^2)$ & $\mathcal{O}(M)$ \\ \bottomrule
    \end{tabular}
    \label{tab:complexity}
\end{table*}
    
    \stitle{Theoretical analysis.}
        Following, we provide a detailed explanation of why attribute similarity matching is more robust than matching training trajectories.

        \begin{theorem}
            In TAG contrastive learning, the optimal embeddings $f_G^{*}(X_i)$ and $f_T^{*}(T_i)$ are unique only up to a rotation matrix $R$ followed by a scaling matrix $S$:
            \begin{equation}
                f^{*}_G(X_i) \;=\; \frac{1}{\sqrt{p_X(i)}}\bigl(U_i^{k} S R\bigr)^{\top},
                \label{equ:ideal graph representation}
            \end{equation}
            \begin{equation}
                f^{*}_T(T_i) \;=\; \frac{1}{\sqrt{p_T(i)}}\bigl(V_i^{k}\Sigma^{k} S^{-1} R\bigr)^{\top},
                \label{equ:ideal text representation}
            \end{equation}
            where \(U^{k}, \Sigma^{k}, V^{k}\) are the rank-\(k\) factors of the normalized attribute joint distribution \(\tilde{P}\).
        
            Consequently, the logarithm of the optimal similarity \(s_{ij}^{*}\) between any node attribute \(X_i\) and text attribute \(T_j\) equals their pointwise mutual information:
            \begin{equation}
            \log s_{ij}^{*} = \log\bigl(f^{*}_G(X_i) f^{*}_T(T_j)^{\top}\bigr) = \operatorname{PMI}(X_i, T_j).
            \label{equ:similarity}
            \end{equation}
    
        \label{th:similarity}
        \end{theorem}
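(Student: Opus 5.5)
Following the spectral view of contrastive learning (as in the CLIP/InfoNCE analyses of HaoChen et al.\ and related multimodal work), I would first replace the InfoNCE objective in Equation~\ref{equ:contrastive loss} by its population-level surrogate. Let $p_{XT}(i,j)$ be the joint distribution of positive (graph, text) pairs with marginals $p_X$ and $p_T$. The surrogate is the spectral contrastive loss
\[
\mathcal{L}(f_G,f_T) = -2\,\mathbb{E}_{(i,j)\sim p_{XT}}\bigl[f_G(X_i)^\top f_T(T_j)\bigr] + \mathbb{E}_{i\sim p_X,\,j\sim p_T}\bigl[(f_G(X_i)^\top f_T(T_j))^2\bigr].
\]
I would justify the replacement by the standard argument that InfoNCE with large batches and a second-order expansion of the log-partition term recovers this objective up to constants. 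Then I would substitute $F_i=\sqrt{p_X(i)}\,f_G(X_i)$ and $H_j=\sqrt{p_T(j)}\,f_T(T_j)$, stacked as the rows of $F$ and $H$, and define the normalized joint matrix $\tilde P_{ij}=p_{XT}(i,j)/\sqrt{p_X(i)p_T(j)}$. With these, the loss becomes $\|\tilde P - FH^\top\|_F^2$ minus a constant, which is a pure low-rank matrix approximation problem.

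Second, I would apply Eckart--Young--Mirsky. With embedding dimension $k$, every minimizer satisfies $FH^\top = U^k\Sigma^k (V^k)^\top$, the best rank-$k$ approximation of $\tilde P$. Any factorization of this product has the form $F = U^k A$ and $H = V^k\Sigma^k A^{-\top}$ for some invertible $A$. Writing $A$ by a polar/SVD-type decomposition as $SR$, with $S$ a scaling (diagonal, positive) matrix and $R$ a rotation, gives exactly Equations~\ref{equ:ideal graph representation} and~\ref{equ:ideal text representation} after undoing the $\sqrt{p}$ reweighting. One convention check is needed: since $(SR)^{-\top} = S^{-1}R$ when $R$ is orthogonal and $S$ is diagonal, the stated form matches. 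I will be explicit that "unique up to $SR$" means up to this invertible-matrix ambiguity, which I restrict to the class $S R$ that the statement names. This restriction follows the paper's convention: general invertible $A$ is $R_1 S R_2$, and I would absorb $R_1$ by choosing singular vectors.

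Third, for the consequence I compute
\[
f^*_G(X_i)^\top f^*_T(T_j) = \frac{(U^k\Sigma^k V^{k\top})_{ij}}{\sqrt{p_X(i)p_T(j)}},
\]
and the factors $S$ and $R$ cancel; this is precisely the invariance that makes similarity matching stable. In the full-rank (or sufficiently large $k$) regime this equals $\tilde P_{ij}/\sqrt{p_X(i)p_T(j)} = p_{XT}(i,j)/(p_X(i)p_T(j))$. Its logarithm is $\operatorname{PMI}(X_i,T_j)$, giving Equation~\ref{equ:similarity}.

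The main obstacle is the first step: InfoNCE is not literally the spectral loss. If a rigorous link fails, I would instead use the known result that the optimal InfoNCE critic satisfies $\exp(s_{ij}/\tau)\propto p(i,j)/(p(i)p(j))$. That result gives PMI directly, up to an additive constant, and in that case the exact identification of $s^*_{ij}$ with $\exp\operatorname{PMI}$ needs the paper's implicit normalization. The rank-$k$ truncation must also be flagged: the equality in Equation~\ref{equ:similarity} is exact only when $k\ge\operatorname{rank}\tilde P$, and is otherwise the best rank-$k$ estimate of PMI.
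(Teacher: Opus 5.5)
Your route is the same as the paper's. You adopt the spectral surrogate (the paper takes it from Zhang et al.\ without derivation), rewrite it as $\|\tilde P-FH^\top\|_F^2$ plus a constant via the $\sqrt{p}$ reweighting, apply Eckart--Young, and note that the factor ambiguity cancels in $FH^\top$. Your rank caveat is a real improvement, since the paper's last line silently uses $U^k\Sigma^kV^{k\top}=\tilde P$ while assuming $k\ll N$.

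The step that fails is reducing a general invertible $A=R_1SR_2$ to the form $SR$ by absorbing $R_1$ into the singular vectors. Replacing $U^k$ by $U^kR_1$ forces $V^k$ to become $V^k\Sigma^kR_1(\Sigma^k)^{-1}$. That matrix has orthonormal columns only if $R_1$ commutes with $\Sigma^k$, i.e.\ $R_1$ is a sign flip or a rotation inside a block of equal singular values. For a concrete counterexample, take $k=2$ with distinct nonzero $\sigma_1,\sigma_2$ and $A=\begin{pmatrix}1&1\\0&1\end{pmatrix}$. The pair $(U^kA,\,V^k\Sigma^kA^{-\top})$ is still a minimizer. The singular vectors spanning $\operatorname{col}(F)$ are fixed up to simultaneous signs, which only flip signs in $S$. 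So $A$ would have to be diagonal times orthogonal, i.e.\ have orthogonal rows, and it does not.

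Hence the ``unique only up to $SR$'' clause is false as stated for $k\ge 2$, and no argument can establish it; the paper's proof merely asserts the $DR$ form. What you can prove, and all the similarity conclusion uses, is uniqueness up to $A\in\mathrm{GL}(k)$. Even this needs $F,H$ of full column rank ($\sigma_k>0$, i.e.\ $k\le\operatorname{rank}\tilde P$), while exact PMI needs $k\ge\operatorname{rank}\tilde P$. So the clean statement holds only at $k=\operatorname{rank}\tilde P$. For larger $k$ the factors are looser still (a column of $F$ paired with a zero column of $H$ is arbitrary), though $FH^\top=\tilde P$ remains pinned.

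On the first step, do not claim a derivation. A second-order (cumulant) expansion of the InfoNCE log-partition term gives $\mathbb{E}_{\rm neg}[s]+\tfrac12\mathrm{Var}_{\rm neg}(s)$. This differs from the spectral loss by embedding-dependent first-moment and squared-mean terms, so the two objectives do not agree up to constants. The expansion is also valid only for small $s/\tau$, whereas the spectral optimum $s^*_{ij}=p(i,j)/(p(i)p(j))$ need not be small. The paper likewise just adopts the spectral loss as the object of analysis, and you should present it the same way, as a modeling assumption.

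Your fallback does not rescue the stated identity either. The optimal InfoNCE critic gives $s^*_{ij}/\tau=\operatorname{PMI}(X_i,T_j)$ up to an additive normalizing term, so PMI is affine in $s^*_{ij}$, not in $\log s^*_{ij}$. No normalization convention converts one relation into the other, so that route proves a different theorem.
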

        
        \begin{figure}[t]
            \centering
            \includegraphics[width=\linewidth]{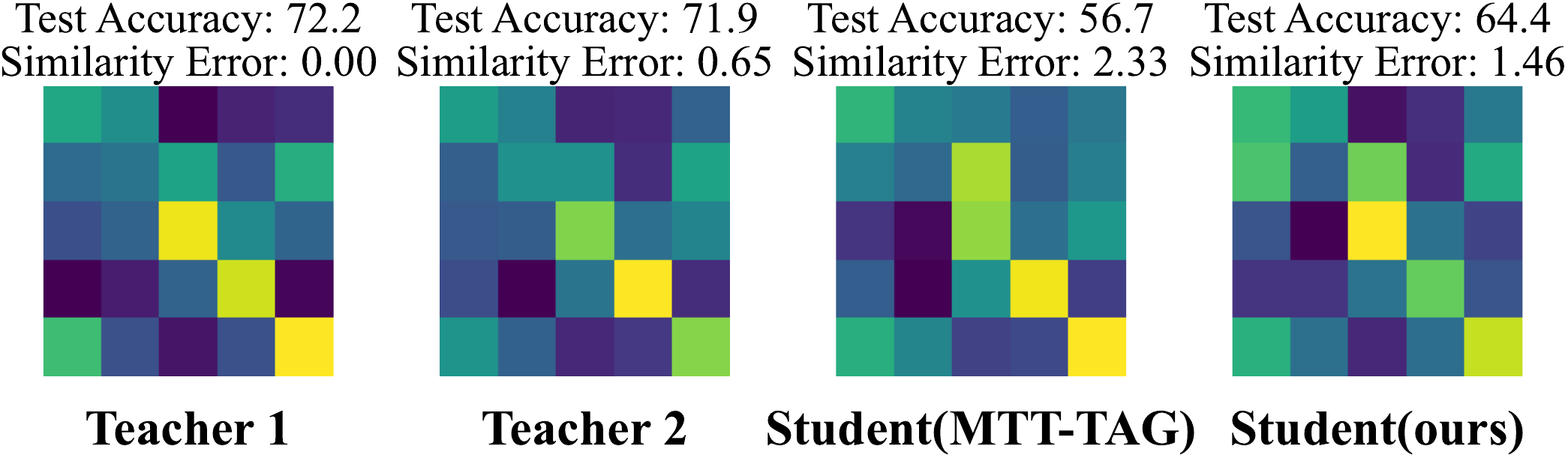}
            \caption{Visualization of attribute similarity matrices for teacher and student models. \textit{Similarity Error} is the element-wise relative error with respect to Teacher 1.}
            \label{fig:similarity}
        \end{figure}

        The proof can be found in Appendix~\ref{apx:proof2}. We observe two insights from  Theorem~\ref{th:similarity} as follows:
        \squishlist
            \item \textit{Unstable training trajectory of MTT.}
            Equations~\ref{equ:ideal graph representation} and \ref{equ:ideal text representation} show that teacher embeddings are defined up to arbitrary rotation \(R\) and scaling \(S\).  
            Because MTT forces the student to follow the teacher’s full training trajectories, the search space becomes very large and optimization turns unstable.  
            Meanwhile, random sampling in contrastive learning keeps these trajectories more fluctuated.
            As \textit{Student (MTT-TAG)} in Figure~\ref{fig:similarity} illustrates, the unstable training leads to lower accuracy.
        
            \item \textit{Robust similarity matrix.} Equation~\ref{equ:similarity} shows that the optimal element \( s_{ij}^* \) of this matrix is equal to the PMI between \( X_i \) and \( T_j \), making them independent of the arbitrary matrices \( R \) and \( S \) in the graph and text encoders. Attribute similarity matching aligns the similarity matrices of the teacher and student models, thereby avoiding directly matching the graph and text encoders. Figure~\ref{fig:similarity} shows that our student model achieves lower error and higher accuracy than the student model in MTT-TAG.
        
        \squishend

    \stitle{Complexity analysis.}
        For an intuitive comparison, we present the detailed time complexity of matching training trajectory (MTT) and attribute similarity matching (ASM) in Table~\ref{tab:complexity}. Let \( E \) be the nodes and edges in the \(\mathcal{T}\), and \( M \) the nodes in the \(\mathcal{S}\). \( Z \) appears only in the teacher training process of MTT, representing the number of teachers. In the inner loop, the student model undergoes \( q \) epoch updates. Additionally, The model's graph encoder has \( L_g \) layers with \( h_g \) hidden dimension, and the text encoder comprises \( L_t \) layers with \( h_t \) hidden dimension. $B$ and $m$ denote the batch sizes for teacher training and similarity matching, respectively. 

        According to Table~\ref{tab:complexity}, the overall time complexity of both methods is largely similar, except for the ``Teacher Model Training'' phase. MTT's approach requires training Z (typically ~15) separate teacher models for collecting varied training trajectories, making its cost prohibitive for large-scale graphs. In contrast, by matching stable similarity matrices, our ASM method needs only one teacher. This fundamental difference makes ASM a significantly more efficient and scalable solution for TAG condensation.

\section{Experimental Evaluations}




\begin{table*}[t]
\caption{Classification accuracy in percentage (\%) for models trained on the condensed datasets (mean±std).
Best results are in bold. \textit{Whole} refers to training on the original graph. Numbers below datasets indicate the original node count, while \textit{Size} denotes the compressed node count. We calculate the accuracy increment by comparing against the best baseline in each setting.
}
\centering

\begin{tabular}{c|c|ccc|c|ccl|c}
\toprule
\textbf{Dataset} & \textbf{Size} & \textbf{Random} & \textbf{Herding} & \textbf{K-Center} & \textbf{GCond*} & \textbf{MTT-TAG} & \textbf{MTT-KNN} & \textbf{TAGSAM (Ours)} & \textit{Whole} \\ \toprule
& 50 & $50.0 \pm 2.5$ & $33.4 \pm 2.7$ & $47.3 \pm 2.6$ & $40.1 \pm 3.9$ & $64.4 \pm 3.4$ & $62.8 \pm 1.6$ & \bm{$73.8 \pm 1.8 (\uparrow9.4)$} & \\
& 100 & $68.1 \pm 1.4$ & $34.4 \pm 2.1$ & $61.9 \pm 1.1$ & $49.7 \pm 3.6$ & $73.8 \pm 0.9$ & $71.1 \pm 1.7$ & \bm{$75.2 \pm 1.1 (\uparrow1.4)$} & \\
\multirow{-3}{*}{\parbox{1.8cm}{\centering Cora \\ (2,708)}} & 200 & $71.8 \pm 1.2$ & $43.8 \pm 0.9$ & $68.3 \pm 1.1$ & $55.2 \pm 2.9$ & $76.1 \pm 0.5$ & $72.4 \pm 1.2$ & \bm{$76.6 \pm 0.8 (\uparrow0.5)$} & \multirow{-3}{*}{$78.7 \pm 0.5$} \\ \midrule
& 100 & $41.4 \pm 2.1$ & $20.8 \pm 1.1$ & $40.8 \pm 3.1$ & $38.5 \pm 4.3$ & $51.7 \pm 3.4$ & $49.8 \pm 2.7$ & \bm{$56.4 \pm 2.3 (\uparrow4.7)$} & \\
& 200 & $50.4 \pm 2.3$ & $29.2 \pm 2.1$ & $45.3 \pm 3.4$ & $43.3 \pm 3.9$ & $56.2 \pm 0.9$ & $52.8 \pm 0.8$ & \bm{$60.8 \pm 1.9 (\uparrow4.6)$} & \\
\multirow{-3}{*}{\parbox{1.8cm}{\centering Photo \\ (48,362)}} & 500 & $59.1 \pm 1.9$ & $33.5 \pm 0.6$ & $53.6 \pm 2.1$ & $45.4 \pm 4.2$ & $59.1 \pm 0.6$ & $56.9 \pm 0.9$ & \bm{$63.3 \pm 0.9 (\uparrow4.2)$} & \multirow{-3}{*}{$63.3 \pm 0.2$} \\ \midrule
& 100 & $32.4 \pm 3.3$ & $21.5 \pm 3.8$ & $42.4 \pm 3.8$ & $43.8 \pm 4.9$ & $48.0 \pm 1.7$ & $37.3 \pm 3.1$ & \bm{$62.1 \pm 0.6 (\uparrow14.1)$} & \\
& 200 & $46.4 \pm 3.0$ & $30.4 \pm 1.7$ & $45.4 \pm 2.1$ & $48.4 \pm 2.2$ & $56.7 \pm 1.6$ & $56.7 \pm 2.9$ & \bm{$64.4 \pm 0.3 (\uparrow7.7)$} & \\
\multirow{-3}{*}{\parbox{1.8cm}{\centering Computer \\ (87,229)}} & 500 & $49.8 \pm 2.5$ & $35.3 \pm 2.5$ & $49.0 \pm 3.1$ & $50.1 \pm 2.0$ & $61.6 \pm 0.9$ & $59.8 \pm 1.2$ & \bm{$70.6 \pm 0.4 (\uparrow9.0)$} & \multirow{-3}{*}{$72.2 \pm 0.3$} \\ \midrule
& 100 & $46.3 \pm 1.1$ & $28.3 \pm 1.4$ & $45.2 \pm 1.1$ & $27.7 \pm 3.1$ & $22.7 \pm 0.9$ & $20.3 \pm 3.3$ & \bm{$49.6 \pm 1.1 (\uparrow3.3)$} & \\
& 200 & $47.5 \pm 1.2$ & $29.7 \pm 0.6$ & $47.0 \pm 0.7$ & $35.8 \pm 3.4$ & $30.5 \pm 1.6$ & $28.8 \pm 3.4$ & \bm{$51.9 \pm 1.5 (\uparrow4.4)$} & \\
\multirow{-3}{*}{\parbox{1.8cm}{\centering Arxiv \\ (169,343)}} & 500 & $52.7 \pm 0.5$ & $32.7 \pm 0.4$ & $52.8 \pm 0.5$ & $42.3 \pm 1.9$ & $49.5 \pm 0.5$ & $48.1 \pm 2.3$ & \bm{$55.4 \pm 0.6 (\uparrow2.6)$} & \multirow{-3}{*}{$55.3 \pm 0.3$} \\ \midrule
& 500 & $52.7 \pm 2.3$ & $17.8 \pm 1.4$ & $26.3 \pm 2.4$ & $39.0 \pm 1.8$ & $48.4 \pm 1.6$ & $46.2 \pm 3.2$ & \bm{$55.4 \pm 1.0 (\uparrow2.7)$} & \\
& 1000 & $53.8 \pm 1.8$ & $25.8 \pm 0.7$ & $29.9 \pm 2.3$ & $45.8 \pm 2.0$ & $50.0 \pm 2.4$ & $47.6 \pm 2.1$ & \bm{$55.8 \pm 0.6 (\uparrow2.0)$} & \\
\multirow{-3}{*}{\parbox{1.8cm}{\centering Products \\ (2,449,029)}} & 2000 & $54.8 \pm 2.0$ & $33.7 \pm 0.7$ & $38.5 \pm 2.4$ & $47.5 \pm 1.5$ & $53.3 \pm 1.7$ & $49.5 \pm 2.3$ & \bm{$57.3 \pm 0.2 (\uparrow2.5)$} & \multirow{-3}{*}{$57.6 \pm 0.1$} \\ \bottomrule
\end{tabular}
\label{tab:main result}
\end{table*}

\subsection{Experiment Settings}
    \stitle{Datasets.}
        We benchmark TAGSAM on 5 text-attributed graphs.
        \textit{Cora}~\cite{sen2008collective} and \textit{ArXiv}~\cite{wang2020microsoft} are citation networks whose nodes are papers described by titles and abstracts, with edges representing citations.
        \textit{Photo}, \textit{Computers}~\cite{shchur2018pitfalls}, and \textit{Products}~\cite{hu2020open} are Amazon co-purchase graphs; each node is a product whose text attribute is its most‐voted user review, and edges indicate co-purchase interactions.
        These datasets vary widely in scale, from 2.7 K to 2.4 M nodes.
        More details about datasets are provided in Appendix~\ref{apx:dataset detail}.
    
    \stitle{Baselines.}
        We compare TAGSAM with 6 representative baselines.
        
        \squishlist
        \item \textbf{Selective methods.} (a) \textit{Random} selects samples randomly. (b) \textit{Herding}~\cite{welling2009herding} greedily selects samples in order to minimize the distance between the center of the coreset and the center of the full dataset in feature space. (c) \textit{K-Center}~\cite{farahani2009facility} aims to select coreset samples that maximize their dispersion in the feature space.
        
        \item \textbf{Generative methods.} (a) \textit{GCond*} adapts GCond~\cite{jin2021graph} for text-attributed graphs by extending its original graph-only gradient matching to a joint graph-text objective.
        (b) \textit{MTT-TAG} adapts MTT-VL~\cite{wu2023multimodal}, an image-text multi-modal condensation method, by replacing its image encoder with a graph encoder.
        (c) \textit{MTT-KNN} further compresses text into optimized embeddings, then employs a K-Nearest Neighbors (KNN) mapping, a variant approach proposed by MTT-VL, to retrieve the closest original text.
        \squishend
    
    \stitle{Evaluation protocol.} 
        Models are then trained on condensed graph and evaluated on the original test split. Following the five-way zero-shot classification setting of G2P2 \cite{wen2023augmenting}, we report the classification accuracy over 5 independent runs. By default, GCN~\cite{kipf2016semi} and BERT~\cite{devlin2019bert} with a projector serve as the graph and text encoders. More hyperparameter settings can be found in Appendix~\ref{apx:parameter}. 

    \begin{table}[t]
    \centering
    \setlength{\tabcolsep}{2.0mm} 
    \caption{Cross-architecture classification accuracy for models trained on the condensed Computer which is compressed using a combination of GCN and Bert.}
    \begin{tabular}{cc|cccc}
    \toprule
    \multirow{2}{*}{\textbf{Method}} & \multirow{2}{*}{\makecell{\textbf{Text} \\ \textbf{Encoder}}} & \multicolumn{4}{c}{\textbf{Graph Encoder}} \\ \cline{3-6}
                                     &  & \textbf{GCN}  & \textbf{MLP}  & \textbf{SAGE} & \textbf{Cheby}\\ \hline
    Random  &  \multirow{3}{*}{BERT}    & 43.6          & 39.0          & 35.2          & 33.4          \\
    MTT-TAG &                           & 62.1          & 38.0          & 34.8          & 35.0          \\
    TAGSAM  &                           & \textbf{64.4} & \textbf{46.9} & \textbf{45.5} & \textbf{46.2} \\ \hline
    Random  &  \multirow{3}{*}{RoBERTa} & 28.5          & 25.2          & 25.0          & 23.6          \\
    MTT-TAG &                           & 39.3          & 27.5          & 29.7          & 28.3          \\
    TAGSAM  &                           & \textbf{47.7} & \textbf{35.4} & \textbf{35.7} & \textbf{36.3} \\
    \bottomrule
    \end{tabular}
    \label{tab:cross architectures}
\end{table}

\subsection{Main Results}
    \stitle{Classification performance.}
        To assess the quality of the compressed TAGs, Table~\ref{tab:main result} reports the classification accuracy of all baselines and our TAGSAM. We make the following observations:
        
        Selective methods like Herding, and K-Center perform relatively poorly, confirming that simple heuristic sampling without learning cannot effectively preserve information from original TAG. Random sampling sometimes outperforms most other methods, especially on datasets with many classes (e.g. ArXiv and Products). This phenomenon has also been observed in image-text dataset condensation~\cite{wu2023multimodal}. We hypothesize this is because without class guidance, their selection heuristics can introduce significant sampling bias by high-density features that may not correlate with class diversity. Random sampling, in contrast, is less affected as it is agnostic to these potentially misleading data structures.

        GCond* performs poorly because its gradient-based matching strategy is designed to match the gradients of samples with the same class label. However, in TAG condensation, where no class label can be used, this approach is ineffective because matching gradients between different class samples will lead to instability. Furthermore, such conventional graph condensation methods are designed to compress graph modality only, which means they are limited by their inability to compress text modality.
        
        Existing multi-modal condensation methods like MTT-KNN and MTT-TAG generally perform best on average. Notably, the accuracy of MTT-KNN is lower than MTT-TAG. This is because MTT-KNN compresses texts into embedding vectors, for which it has to find the nearest neighbor original text descriptions. Such an operation disrupts the alignment between the compressed node features and texts produced by the learning process.
        Despite their higher performance, they are also limited by their inability to compress discrete text descriptions and suffer from unstable matching objectives. Our proposed TAGSAM overcomes these limitations, achieving superior performance across all datasets and condensation sizes. It boosts accuracy by up to 14.1\% over the strongest baseline MTT-TAG. For larger condensation sizes, TAGSAM's accuracy, although demonstrating more variability compared to training on the Whole dataset, reaches an average value highly comparable to it.

    \stitle{Cross-architecture generalization.}  
        Table~\ref{tab:cross architectures} evaluates cross architecture generalization by condensing the Computer dataset to 200 nodes. We compare our method, TAGSAM, with top baselines: Random for selective condensation and MTT-TAG for generative condensation. Specifically, we use a GCN+BERT architecture to condense the dataset for TAGSAM and MTT-TAG. Subsequently, these small, condensed graphs are used to train various other model combinations, including graph encoders (GCN, MLP, SAGE~\cite{hamilton2017inductive}, Cheby~\cite{defferrard2016convolutional}) and text encoders (BERT~\cite{devlin2019bert}, Roberta~\cite{liu2019roberta}).
        
        The results in Table \ref{tab:cross architectures} suggest that all methods experience accuracy degradation. The degradation also occurs in Random, demonstrating that there is significant accuracy loss caused by the encoder's lower compatibility. For the MTT-TAG and TAGSAM, the degradation is additionally caused by the use of different architectures during the condensation and testing phases.
        However, TAGSAM consistently achieves the highest accuracy due to our effective and stable condensation. Compared to random sampling, TAGSAM shows a substantial average improvement of 13.1\%. 

    \begin{table}[t]
    \centering
    \setlength{\tabcolsep}{3.0mm}
    \caption{Link prediction AUC for models trained on the condensed datasets (mean+std). Best results are in bold.}
    \begin{tabular}{l|ccc}
    \toprule
    Dataset     & Random           & MTT-TAG        & TAGSAM              \\ \toprule
    Cora        & $75.85 \pm 1.32$ & $78.26 \pm 1.34$ & \bm{$82.15 \pm 0.97$} \\
    Photo       & $84.34 \pm 1.43$ & $86.44 \pm 1.41$ & \bm{$87.69 \pm 1.20$} \\
    Computer    & $87.62 \pm 2.15$ & $88.33 \pm 1.74$ & \bm{$91.38 \pm 1.83$} \\
    Arxiv       & $81.93 \pm 2.04$ & $86.35 \pm 1.22$ & \bm{$87.37 \pm 1.57$} \\ 
    Products    & $84.21 \pm 2.20$ & $84.62 \pm 2.39$ & \bm{$86.06 \pm 2.05$} \\ 
    \bottomrule
    \end{tabular}
    \label{tab:link}
\end{table}

    \stitle{Link prediction performance.}        
        We evaluate link prediction by training an MLP on features extracted from the graph encoder. As shown in Table~\ref{tab:link}, our TAGSAM outperforms the top selective (Random) and generative (MTT-TAG) baselines, achieving significant AUC improvements of 4.1\% and 2.1\% respectively. This performance suggests our method more effectively retains structural information from the original graph, thereby enhancing the models trained on the resulting condensed graph.

\begin{figure}[t]
    \centering
    \includegraphics[width=\linewidth]{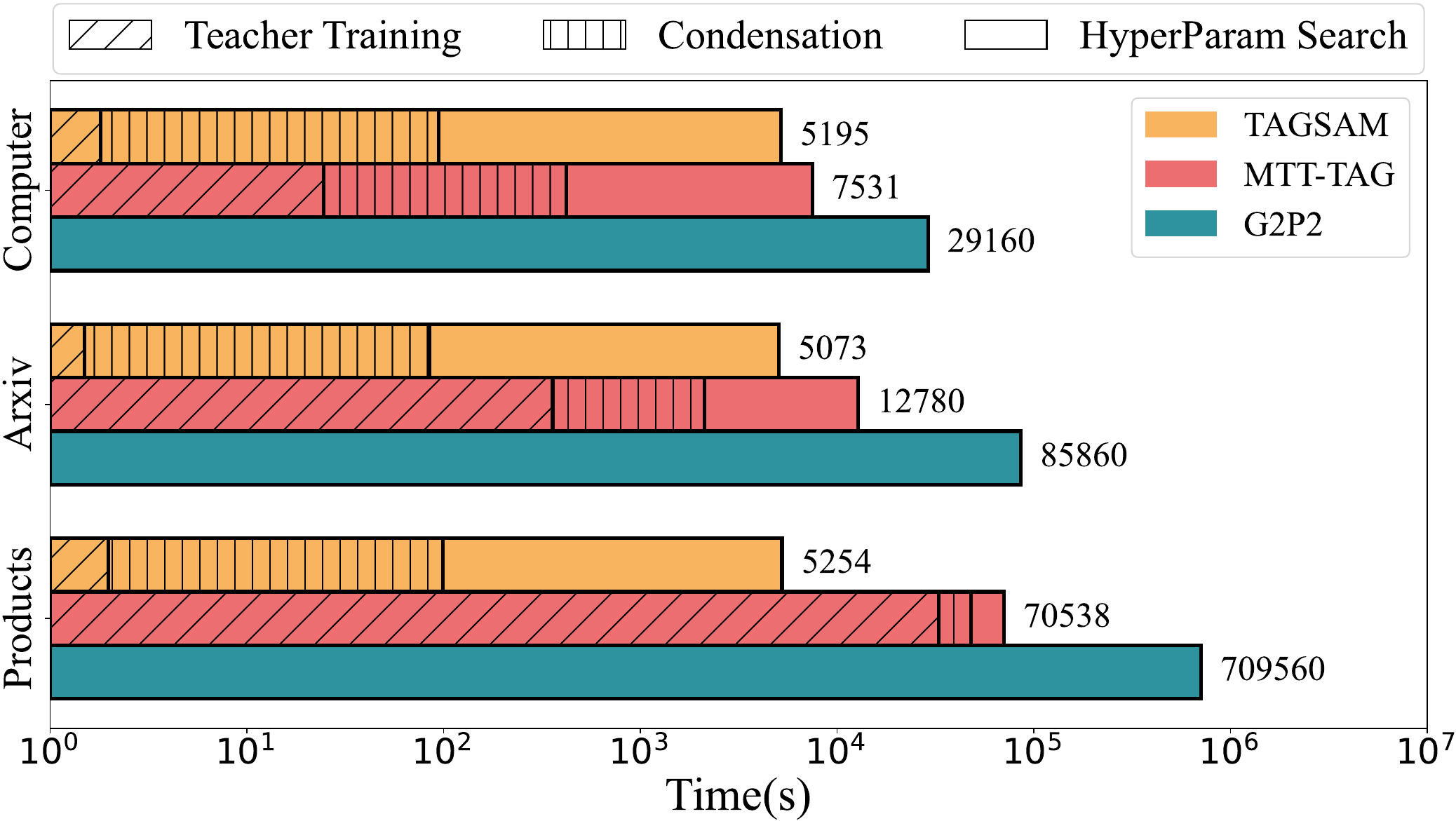}
    \caption{Running time (in log scale) for hyperparameter search of downstream model. G2P2 trains directly on the original graph, while TAGSAM and MTT-TAG first compress the graph and then train models on the condensed graph.}
    \label{fig:efficiency}
\end{figure}
    
    \stitle{Efficiency study.} 
        We study the impact of TAG condensation on accelerating hyperparameter searches for downstream TAG models, and compare running time of TAGSAM, MTT-TAG and G2P2 (direct training). Specifically, we use 108 hyperparameter settings: optimizers $\{\text{SGD},\ \text{Adam},\ \text{AdamW}\}$; graph encoder depths $\{2,3,4\}$; projector depths $\{1,2,3\}$; and embedding sizes $\{64,128,192,256\}$. We also decompose the running time of MTT-TAG and TAGSAM into \textit{teacher training}, \textit{condensation}, and \textit{hyperparameter search}. 

        The results in Figure~\ref{fig:efficiency} show that the larger the dataset, the greater the time savings from condensation.  
        Although condensation incurs a one-time cost, it significantly reduces the time for each subsequent hyperparameter trial, meaning its advantage grows with the number of searches.
        For larger datasets, teacher model training dominates total runtime. TAGSAM requires training only a single teacher due to its attribute similarity consistency (Figure \ref{fig:similarity}), eliminating the need for additional teachers, unlike MTT-TAG which must train multiple teachers to reflect diverse training trajectories.

    \begin{figure}[t]  
        \centering
        \includegraphics[width=0.95\linewidth]{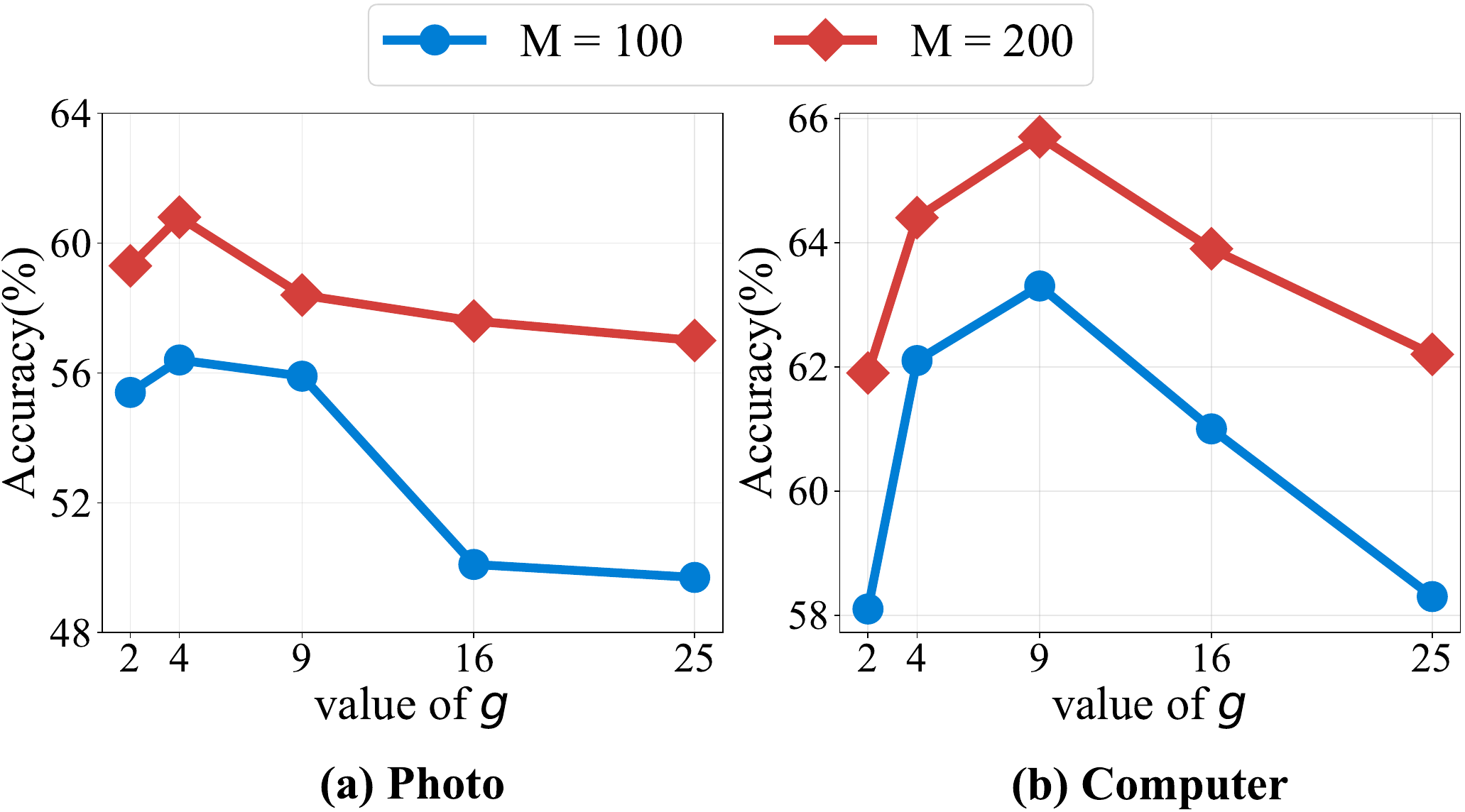}
        \caption{The impact of subgraph size for subgraph text selection. \textit{M} denotes the number of compressed texts/nodes.}
        \label{fig:selection number}
    \end{figure}

    \stitle{Hyperparameter sensitivity.}
         We study the hyperparameter sensitivity experiments with respect to the subgraph size \(g\) in subgraph text selection on the Photo and Computer datasets. The results are shown in Figure \ref{fig:selection number}. A larger \(g\) means more information is integrated, but it can also lead to more information confusion. 
         
         We validate the impact of \( g \) on compression performance through experiments on the Photo and Computer datasets. We find that setting \(g = 4\) and \(g = 9\) yields the best performance for the downstream models. Additionally, we observe a sharp performance drop when \(g \ge 16\). We attribute this decline to the fact that such a large \(g\) involves selecting many 2-hop neighbors, which introduces semantically disparate texts, making it difficult to optimize the alignment of the corresponding node features for the newly compressed text.

    \begin{figure}[t]
        \centering
        \includegraphics[width=\linewidth]{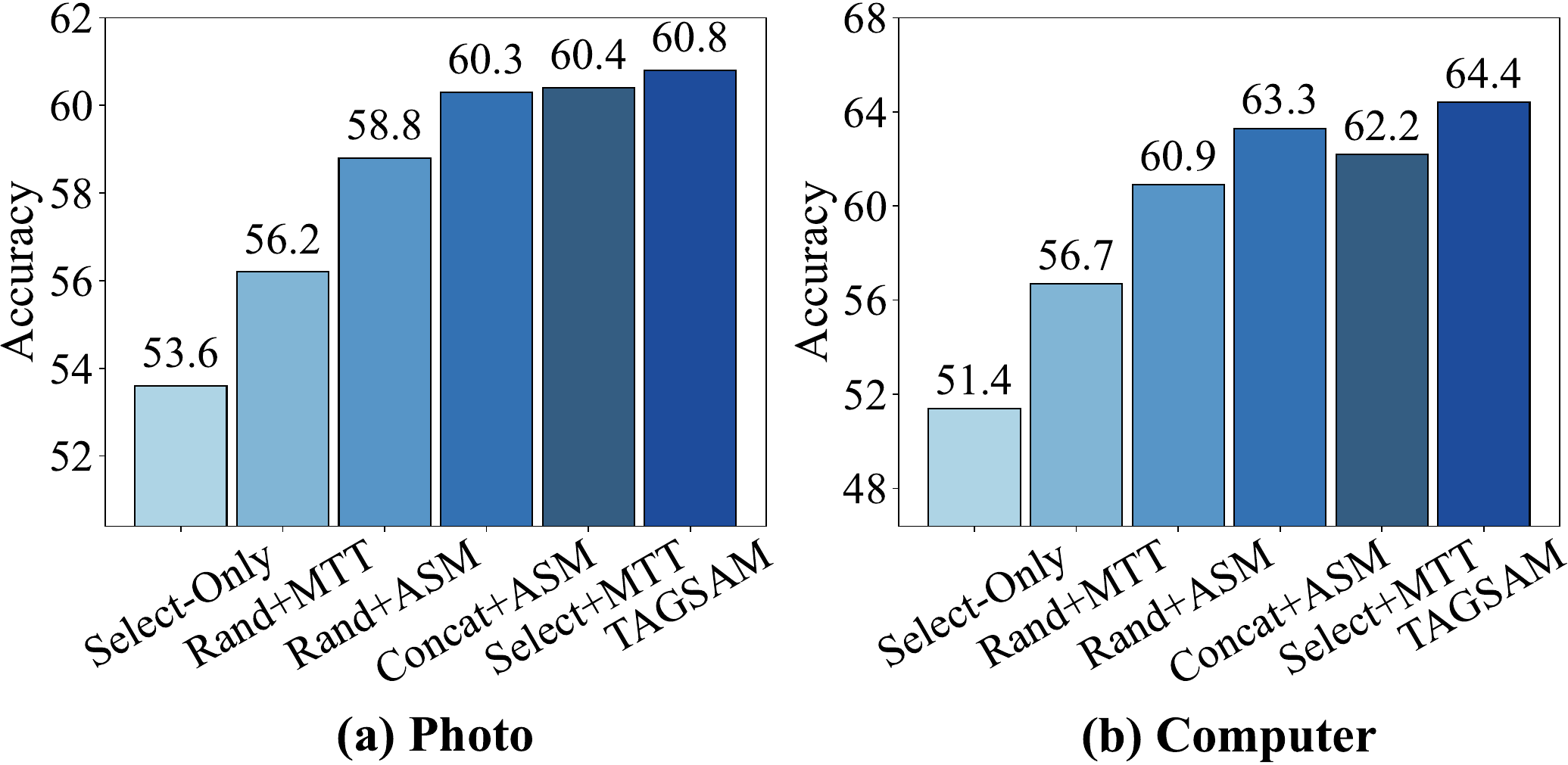}
        \caption{The ablation study for different variants. The $y$-axis is anchored at the accuracy of the baseline \textit{Random}.
        }
        \label{fig:ablation}
        \vspace{-3mm}
    \end{figure}

\subsection{Ablation Study}
    As reported in Figure~\ref{fig:ablation}, we experiment under the setting of synthetic dataset size \(M=200\), with 5 variants as follows:
    
    \squishlist
        \item \textit{Select-Only} conducts subgraph text selection to compress TAG.
        
        \item \textit{Rand+MTT} randomly selects texts and compresses graph via matching training trajectories. 
        
        \item \textit{Rand+ASM} randomly selects texts and compresses graph via our attribute similarity matching.
        
        \item \textit{Concat+ASM} samples subgraphs and directly concatenates all the texts within each subgraph to form a new text, then compresses graph via attribute similarity matching. 
        
        \item \textit{Select+MTT} compresses the text and graph by our subgraph text selection and matching training trajectories respectively.
    \squishend
    
    \stitle{The efficacy of text selection.} 
        Comparing ``Select-Only'' to the ``Random'' baseline confirms that text selection enriches the text modality for improved training. The comparison between ``Rand+ASM'' and ``Concat+ASM'' also reveals that text concatenation can enhance performance. However, this naive strategy is outperformed by ``TAGSAM'' as it introduces significant redundancy and noise, limiting its performance. In contrast, our method's precise selection mechanism yields superior performance, improving the ASM framework by 2.8\%. This improvement is more pronounced for the MTT strategy, reaching 4.9\%. These results demonstrate the significance and robustness of our subgraph text selection approach.
    
    \stitle{Attribute vs. Trajectory Matching.} 
        As illustrated in Figure~\ref{fig:ablation}, a clear performance advantage of our proposed Attribute Similarity Matching (ASM) is evident. A direct comparison between ``Rand+MTT'' and ``Rand+ASM'', as well as between ``Select+MTT'' and our ``TAGSAM'' model, consistently demonstrates that ASM significantly and reliably outperforms the Matching Training Trajectories (MTT) baseline across different text compression.

\section{Related Work}
\stitle{Selection-based condensation.}
Coreset selection is a critical technique that compresses large datasets into smaller subsets by selecting important samples. There are various pioneer literature reduces the computational burden of training on large datasets while maintaining model performance~\cite{welling2009herding, farahani2009facility, braverman2022power}. Main approaches are random sampling, Herding, which aligns subset and dataset means~\cite{welling2009herding}, and K-center, which minimizes the worst-case point-to-center distance~\cite{farahani2009facility}. All simply select existing data rather than synthesizing new samples; consequently, their performance is often sub-optimal.

\stitle{Graph dataset condensation.} Graph condensation aims at efficient GNN training \cite{acceleration, towards, edgae} by condensing a large graph into a small synthetic one without significant performance loss. Recently, numerous dataset condensation methods are developed for compressing graph datasets~\cite{gc1, gc2, exgc, gc3, gc4, self-supervised}. GCond~\cite{jin2021graph} matches training gradients between the original and condensed graphs. GCDM~\cite{liu2022graph} forces the spatial distribution of receptive fields in the condensed graph to mirrors that of the original dataset, i.e., distribution matching. However, these existing methods cannot handle the multi-modal setting of TAG condensation and rely on class-wise matching, making them heavily label-dependent.

\stitle{Multi-modal dataset condensation.} Unlike uni-modal methods, multi-modal apporaches compress multiple modalities simultaneously. MTT-VL~\cite{wu2023multimodal} first condense large-scale image-text datasets by matching training trajectories (MTT) into smaller dataset while maintaining effective training. LoRS~\cite{xu2024low} introduces a plug-and-play method that compresses image-text datasets by providing ground truth similarity between synthetic samples and reducing complexity with low-rank approximations. AVDD~\cite{kushwahaaudio} condenses audio-visual data by aligning modalities through implicit cross-matching. However, these methods cannot compress discrete text modalities. and such MTT-based methods often prove unstable in contrastive learning settings, resulting in sub-optimal performance.

\section{Conclusion}
In this work, we introduce TAGSAM, a novel dataset condensation method for Text-Attributed Graphs (TAGs) that addresses the issue that existing methods cannot effectively compress TAGs. Our method utilizes subgraph text selection to compress multiple texts into a representative and discrete one and attribute similarity matching to assign robust synthetic attributes. Experiments show that models trained on graphs condensed by TAGSAM to just 1\% of their original size still maintain competitive performance, thus making efficient pre-training on large-scale TAGs feasible.

\begin{acks}
This work was supported by the National Natural Science Foundation of China (No. 62472327, 62441229), the Fundamental Research Funds for the Central Universities (No. 2042025kf0040), and the Key R\&D Program of Hubei Province (No. 2023BAB077).
\end{acks}

\bibliographystyle{ACM-Reference-Format}
\balance
\bibliography{reference}

\appendix
\section{Proof of Theorem 1}
\label{apx:proof1}
\begin{proof}
Firstly, consider the impact of including \(c_i\) in \(\hat{T_o}\), where \( \hat{T_o} \subseteq D_o \) and \( c_i \in D_o \setminus \hat{T_o} \). The contribution of $c_i$ is calculated as: 
\[\sum_{c_j \in D_o} {\tt CMI}(c_i, c_j) - \sum_{c_j \in \hat{T_o}} {\tt CMI}(c_i, c_j)=\sum_{c_j \in D_o \setminus \hat{T_o}} {\tt CMI}(c_i, c_j).\]
As the estimated CMI values are non-negative, this term is clearly positive. This demonstrates that \( {\tt RS} \) is monotonic, implying that 
\[{\tt RS}(\hat{T_o} \cup \{c_i\}) \geq {\tt RS}(\hat{T_o}).\]
 
Secondly, let \( A \subseteq B \subseteq D_o \) and \( c_i \in D_o \setminus (A \cup B) \). The improvement of adding \( c_i \) to both \( A \) and \( B \) is characterized by the same information coverage gain \(\sum_{c_j \in D_o} {\tt CMI}(c_i, c_j)\). However, the redundancy penalty is larger for \( B \) because the difference 
\[\sum_{c_j \in B \setminus A} {\tt CMI}(c_i, c_j) = \sum_{c_j \in B} {\tt CMI}(c_i, c_j) - \sum_{c_j \in A} {\tt CMI}(c_i, c_j)\] 
is positive. Therefore, submodularity is satisfied:
\[
{\tt RS}(A \cup \{c_i\}) - {\tt RS}(A) \geq {\tt RS}(B \cup \{c_i\}) - {\tt RS}(B).
\]
Finally, according to Nemhauser's theorem~\cite{nemhauserAnalysisApproximationsMaximizing1978}, as \( {\tt RS}(\hat{T_o}) \) is monotonic and submodular, applying a greedy algorithm to maximize \( {\tt RS}(\hat{T_o})  \) under a cardinality constraint \( k \) guarantees that
     \[
        {\tt RS}(\hat{T_o}) \geq \left(1 - \frac{1}{e}\right) {\tt RS}(T^*_o).
     \]

\end{proof}

\section{Proof of Theorem 2}
\label{apx:proof2}
\begin{proof}
Following Zhang et al.~\cite{zhang2023generalization}, we adopt the TAG contrastive loss for ease of analysis:
\begin{equation}
\label{eq:tag_contrastive}
\begin{split}
\mathcal{L}_{\mathrm{con}}
 = {} & -2 \,\mathbb{E}_{X_i,T_i}\!\bigl[\,f_G(X_i)^{\top}f_T(T_i)\bigr] \\
       & +  \mathbb{E}_{X_i,T_j}\!\bigl[\,(f_G(X_i)^{\top}f_T(T_j))^{2}\bigr],
\end{split}
\end{equation}
where \( (X_i,T_i) \) denotes a positive graph-text pair and \( (X_i,T_j) \) with \( j\neq i \) denotes an independently sampled negative pair.

Let \(P\in\mathbb{R}^{N\times N}\) be the joint-probability matrix whose \((i,j)\)-entry is:
\[
P_{ij}\;=\;p_{G,T}(i,j).
\]
Define the marginal probabilities:
\[
p_G(i)=\sum_{j}p_{G,T}(i,j),\qquad
p_T(j)=\sum_{i}p_{G,T}(i,j),
\]
and form the \emph{normalized} co-occurrence matrix:
\[
\tilde{P}_{ij}\;=\;\frac{p_{G,T}(i,j)}{\sqrt{p_G(i)\,p_T(j)}}.
\]

Consider the asymmetric low-rank factorization loss:
\begin{equation}
\label{eq:l_amf_new}
\mathcal{L}_{\mathrm{AMF}}(F_G,F_T)
\;=\;
\bigl\lVert\,\tilde{P}-F_G F_T^{\top}\bigr\rVert_F^{2},
\end{equation}
where \(F_G\in\mathbb{R}^{N\times k}\) and
\(F_T\in\mathbb{R}^{N\times k}\) with \(k\ll N\).

Let the \(i\)-th row of \(F_G\) and the \(j\)-th row of \(F_T\) represent the encoded features of samples \(X_i\) and \(T_j\), respectively, in the following form:
\begin{subequations}
\label{eq:rows_new}
\begin{align}
F_G(i) &= \sqrt{p_G(i)}\,f_G(X_i)^{\top},\\
F_T(j) &= \sqrt{p_T(j)}\,f_T(T_j)^{\top}.
\end{align}
\end{subequations}

Substituting Equation~\ref{eq:rows_new} into Equation~\ref{eq:l_amf_new} and expanding the Frobenius norm yields:
\[
\begin{aligned}
\mathcal{L}_{\mathrm{AMF}}
&=\sum_{i,j}\!\Bigl(
\tilde{P}_{ij}-\sqrt{p_G(i)}\,f_G(X_i)^{\top}\sqrt{p_T(j)}\,f_T(T_j)
\Bigr)^{2} \\[2pt]
&=\sum_{i,j}\frac{p_{G,T}(i,j)^{2}}{p_G(i)p_T(j)} 
   -2\,\mathbb{E}_{X_i,T_j}\bigl[f_G(X_i)^{\top}f_T(T_j)\bigr] \\
   &+\mathbb{E}_{X_i,T_j}\bigl[\bigl(f_G(X_i)^{\top}f_T(T_j)\bigr)^{2}\bigr].
\end{aligned}
\]

The first summation depends only on the data distribution, so denote it by the constant \(C\).
Recognizing the remaining two terms as the contrastive loss \(\mathcal{L}_{\mathrm{con}}\) in
Eq.~\eqref{eq:tag_contrastive}, we obtain:
$$
\mathcal{L}_{\mathrm{AMF}}(F_G,F_T)
\,=\,
\mathcal{L}_{\mathrm{con}}(f_G,f_T)+C.
$$

Because the additive constant \(C\) is independent of the embeddings, minimizing the matrix factorization loss is equivalent to minimizing the contrastive loss.

By the classical Eckart--Young theorem, the best rank-$k$ approximation of the normalized co-occurrence matrix $\tilde P$ is its truncated SVD $U^{k}\Sigma^{k}V^{k\top}$.  Every pair of factor matrices that attains this optimum can therefore be written as:
$$
F_G^{\star}=U^{k}DR,\qquad  
F_T^{\star}=V^{k}\Sigma^{k}D^{-1}R,
$$
where $R\in\mathbb{R}^{k\times k}$ is an arbitrary unitary matrix and $D$ is an arbitrary invertible diagonal matrix.  Mapping the rows of $F_G^{\star}$ and $F_T^{\star}$ back to the encoder outputs yields:
$$
f_G^{\star}(X_i)=\frac{1}{\sqrt{p_G(i)}}\bigl(U^{k}_{i:}DR\bigr)^{\!\top}, 
$$
$$
f_T^{\star}(T_j)=\frac{1}{\sqrt{p_T(j)}}\bigl(V^{k}_{j:}\Sigma^{k}D^{-1}R\bigr)^{\!\top}.
$$

Hence the optimal graph and text embeddings are uniquely determined up to a rotation $R$ and scaling $D$, as claimed in Theorem 2.

Finally, taking the inner product of the optimal embeddings and applying the logarithm gives:
$$
\begin{aligned}
\log s^{*}_{ij} 
      &= \log\!\bigl(f^{*}_G(X_i)^{\top}f^{*}_T(T_j)\bigr) \\[2pt]
      &= \log\!\frac{p_{G,T}(i,j)}{p_G(i)\,p_T(j)} = \log\!\frac{P(X_i,T_j)}{P(X_i)P(T_j)}\\
      &= \operatorname{PMI}(X_i,T_j).
\end{aligned}
$$

Thus the optimal similarity is exactly the point-wise mutual information between $X_i$ and $T_j$, which completes the proof of Theorem 2.

\end{proof}

\section{Zero-shot Classification}
\label{apx:zero-shot}
Following the G2P2~\cite{wen2023augmenting, generative}, in this paper, zero-shot classification aligns graph and text representations within a unified semantic space using contrastive learning. The architecture employs a graph encoder to generate an embedding for each node and a text encoder to transform text prompts describing potential classes—including those unseen during training—into corresponding class embeddings. For inference, the model classifies a node by calculating the cosine similarity between its embedding and all candidate class embeddings. The node is then assigned to the class with the highest similarity score, enabling effective classification into novel categories without any specific training examples for them.

\section{Dataset Details}
\label{apx:dataset detail}

\begin{table}[t]
    \centering
    \caption{Statistics of Text-Attributed Graph datasets.}
    \begin{tabular}{l|rrrr}
    \toprule 
    \textbf{Dataset} & \# \textbf{Node} & \# \textbf{Edge} & \# \textbf{Avg.\ Deg} & \# \textbf{Class} \\ \midrule
    Cora             & 2,708            & 5,429           & 4.01                  & 7  \\
    Photo            & 48,362           & 500,928         & 20.72                 & 12 \\
    Computers        & 87,229           & 721,081         & 16.53                 & 10 \\
    ArXiv            & 169,343          & 1,166,243       & 13.77                 & 40 \\
    Products         & 2,449,029        & 61,859,140      & 50.52                 & 47 \\ \bottomrule
    \end{tabular}
    \label{tab:dataset statistics}
\end{table}

Table~\ref{tab:dataset statistics} brings together five widely adopted TAG benchmarks that span orders of magnitude in scale and density while covering both academic and e-commerce domains, ensuring that our evaluation stresses every component of the condensation pipeline.

Cora~\cite{sen2008collective}. This classic citation network contains 2,708 papers linked by 5,429 citation edges. Each node is accompanied by the raw abstract—typically 60 – 140 words—that spells out the problem statement, core methodology, and main results. 

Amazon Photo and Amazon Computers~\cite{shchur2018pitfalls}. Both graphs are extracted from the Amazon co-purchase network but focus on different product verticals. Amazon Photo (48 k nodes, 500 k edges) centers on cameras, lenses, lighting and accessories, whereas Amazon Computers (87 k nodes, 721 k edges) covers laptops, components and peripherals. For every product node we concatenate the seller’s catalogue title, the marketing blurb, and the first three customer-review snippets, yielding on average 35 – 50 tokens of plain-language description that mention brand names, use-cases, and distinguishing features. 

ArXiv (OGB-arXiv)~\cite{wang2020microsoft}. Drawn from the Open Graph Benchmark, this citation graph contains 169,343 preprints connected by 1.17 M directed edges. Every node stores the full paper title plus abstract (roughly 140 – 300 tokens), resulting in the longest and most technical text fields in our study. 

Amazon Products~\cite{hu2020open}. At 2.5 M nodes and 61 M edges, Amazon Products is the largest graph in our suite. Each node aggregates the product title, description, curated keyword tags, and representative review quotes. After basic cleaning this yields a median of 28 tokens and a long-tail distribution up to several hundred. 

\section{Additional Settings}
\label{apx:parameter}
For completeness, we report the exact hyper-parameter values used in all experiments.

\stitle{Subgraph text selection.}
    The subgraph size \(g = 4\), compression ratio \(k /\left| D \right| =0.6\). And GPT-2~\cite{radford2019language} is used as the language model \(\phi\).

\stitle{Attribute similarity matching.}
    The learnable step-size parameter \(\alpha\) is initialized to $5e-3$. Optimization is carried out for 5,000 epochs with a mini-batch size of \(m = 2000\) using stochastic gradient descent (SGD) with momentum 0.9 \cite{ruder2016overview}. The learning rates for \(\alpha\) and for the synthetic node attributes are $2e-6$ and 100, respectively. To facilitate training, we utilize a pre-trained BERT model and keep it frozen and train the teacher model for 15 epochs. 

\stitle{Experiment environment.}
    All experiments were conducted on a workstation running Ubuntu 22.04.3 LTS. The system was equipped with an Intel Core i9-13900K CPU, 64 GB of DDR5 RAM, and two NVIDIA GeForce RTX 4090 GPUs, each with 24 GB of VRAM. Our models were implemented in Python 3.11 using the PyTorch 2.5 framework, with CUDA 12.1 utilized for GPU acceleration.

\section{Limitation}
While subgraph text selection is effective for compression, the resulting merged texts may sometimes lack human-interpretable semantic coherence. Additionally, the process of aligning attribute similarity matrices, which involves complex structural relationships, highlights the need for further research into more computationally efficient matching loss functions.

\end{document}